\documentclass{article}

\usepackage{microtype}
\usepackage{graphicx}
\usepackage{subcaption}
\usepackage{booktabs} 

\usepackage{hyperref}

\usepackage[accepted]{icml2026}

\usepackage{amsmath}
\usepackage{amssymb}
\usepackage{mathtools}
\usepackage{amsthm}
\usepackage[table]{xcolor}
\usepackage{multirow}

\usepackage[capitalize,noabbrev]{cleveref}

\theoremstyle{plain}
\newtheorem{theorem}{Theorem}[section]

\theoremstyle{definition}

\theoremstyle{remark}

\newcommand{\RMSNorm}{\operatorname{RMSNorm}}

\usepackage[textsize=tiny]{todonotes}

\icmltitlerunning{Tensorizing Engram: Sharing Latents Across N-Gram Embeddings is Beneficial in LLMs}

\begin{document}

\twocolumn[
  \icmltitle{Tensorizing Engram: Sharing Latents Across N-Gram Embeddings is Beneficial in LLMs}

  \icmlsetsymbol{equal}{*}

\begin{icmlauthorlist}
    \icmlauthor{Wuyang Zhou}{yyy}
    \icmlauthor{Yuxuan Gu}{yyy}
    \icmlauthor{Giorgos Iacovides}{yyy}
    \icmlauthor{Yuning Qiu}{zzz}
    \icmlauthor{Qibin Zhao}{zzz}
    \icmlauthor{Danilo Mandic}{yyy}
\end{icmlauthorlist}

  \icmlaffiliation{yyy}{Department of Electrical and Electronic Engineering, Imperial College London, London, United Kingdom}
  \icmlaffiliation{zzz}{AIP, RIKEN, Tokyo, Japan}

  \icmlcorrespondingauthor{Wuyang Zhou}{wuyang.zhou19@imperial.ac.uk}
  \icmlkeywords{Machine Learning, ICML}

  \vskip 0.3in
]

\printAffiliationsAndNotice{}  

\begin{abstract}
Modern language models represent text using discrete token-level embeddings, which forces recurring multi-token
patterns to be learned implicitly across Transformer layers. Both Over-tokenized Transformers and Engram attempt
to address this limitation by explicitly incorporating multi-token (n-gram) memories. However, they rely on separate hash tables for each n-gram order, which introduces hash collisions and prevents nested n-grams from sharing the underlying latent structures. To address these issues, we propose \underline{\textbf{T}}ensorized E\underline{\textbf{ngram}} (TN-gram), a compact memory module that represents tensorized $n$-gram embeddings through shared factors in the Canonical Polyadic (CP) form. TN-gram learns shared token-position factors together with order-absorption vectors to encode the embeddings of different $n$-gram order. Comprehensive experiments demonstrate that TN-gram matches or even outperforms Engram-style $n$-gram modules while requiring much fewer parameters. 
\end{abstract}

\section{Introduction}

Modern Language Models (LMs) \cite{achiam2023gpt,grattafiori2024llama} represents text primarily through token-level embeddings \cite{brown2020language, devlin2019bert, vaswani2017attention}. While this embedding design is simple and scalable, it requires recurring multi-token ($n$-gram) patterns, such as phrases, collocations, and syntactic fragments, to be constructed implicitly across Transformer \cite{vaswani2017attention} layers. This is inevitably inefficient, as model capacity must be split between reasoning and recovering frequent local dependencies that are naturally contiguous $n$-gram contexts, i.e., short sequences of $n$ adjacent tokens \cite{chen1999empirical, deepseek_engram, basharin2024faster, goodman2001bit, overtokenized_transformer, manning1999foundations}.

To utilize $n$-gram contexts, Over-tokenized Transformer \cite{overtokenized_transformer, meituan} expands the embedding vocabulary with higher-order $n$-gram tokens, while DeepSeek's Engram \cite{deepseek_engram} injects $n$-gram information through external memory layers, outperforming the Over-tokenized Transformer. These works have demonstrated that scaling explicit $n$-gram memories can be more effective than scaling the number of experts in the Mixture-of-Experts (MoE) layers \cite{shazeer2017,zhou2022mixture}, and can improve the language modeling performance of LLMs without incurring much more FLOPs. 

However, direct modeling of the whole $n$-gram space is prohibitive in terms of the curse of dimensionality, as $n$-gram embedding requires exponentially growing number of parameters, i.e., $V^n \times d$, where $V$ is the vocabulary size, and $d$ is the model dimension. Assume $V=30,000$ and $d=4096$, then even the simplest $2$-gram embedding already requires 3.69 trillion parameters. Thus, existing works utilizing explicit $n$-gram memories in LLMs usually resort to hash tables to obtain tractable lookup tables, but suffer from hash collisions where distinct $n$-grams share the same slot \cite{deepseek_engram, overtokenized_transformer, meituan}. Furthermore, since a $n$-gram has $V^n$ possible token combinations, if $n$ is large (e.g., $n=6$), the model might not encounter all $V^n$ combinations in training, leaving their embeddings in the hash tables completely untrained.

\begin{figure*}[t!]
  \centering
  \includegraphics[width=0.95\linewidth]{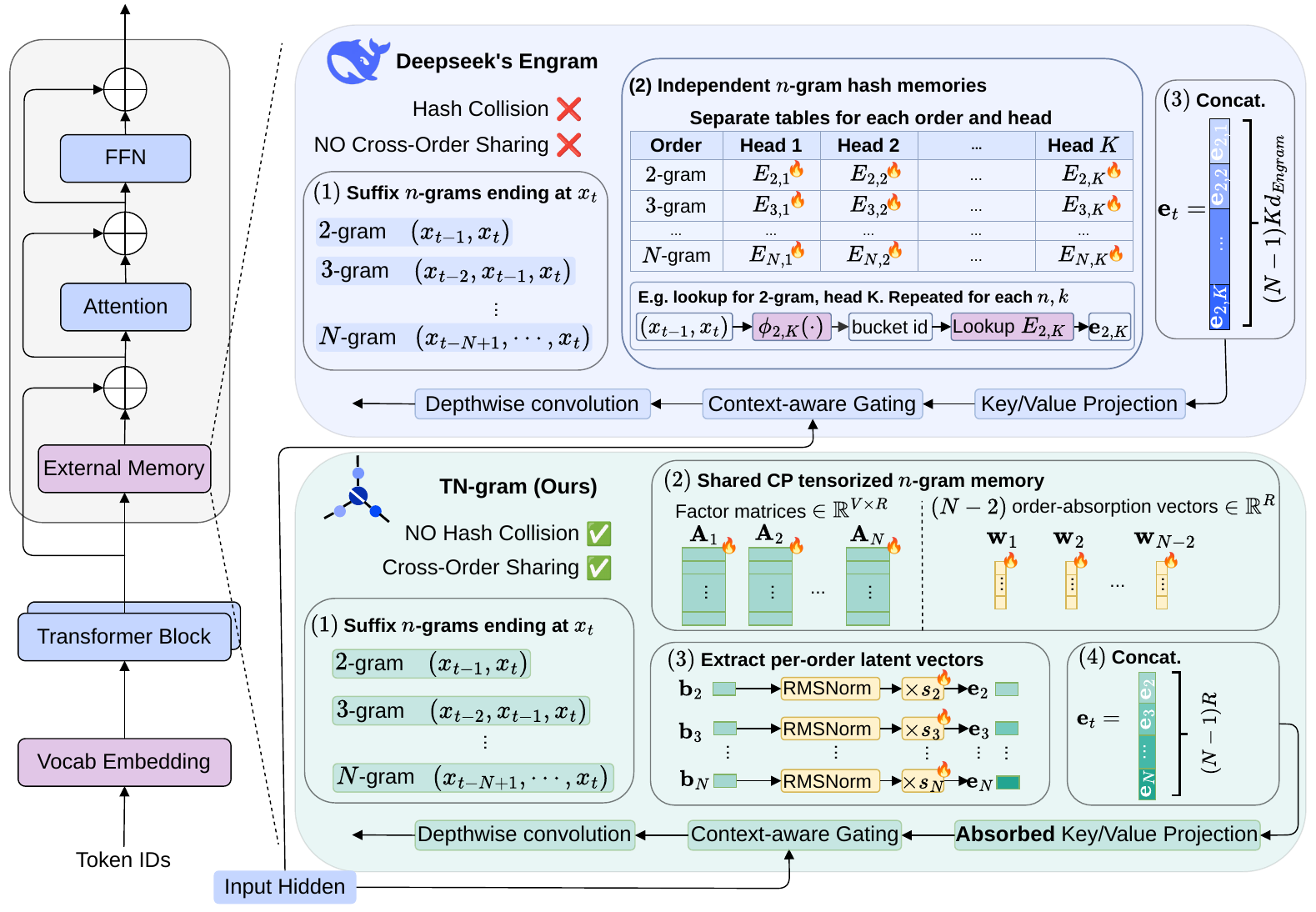}
  \caption{Architecture of Engram \cite{deepseek_engram} and the proposed TN-gram. Both methods inject explicit $N$-gram memory into selected Transformer blocks. Engram uses separate per-order, per-head hash tables $E_{n,k}$, while TN-gram replaces them with a tensorized CP memory module over shared factors ,$\mathbf{A}_1,\ldots,\mathbf{A}_N$, and order-absorption vectors, $\mathbf{w}_1,\ldots,\mathbf{w}_{N-2}$, enabling cross-order latent sharing within differernt $n$-gram orders.
}
\label{fig:architecture}
\end{figure*}

Additionally, the separate hash tables for different $n$-gram orders in existing works \cite{deepseek_engram, overtokenized_transformer, meituan} completely overlook an important intrinsic structure, i.e., an $(n-1)$-gram is always contained within its higher-order $n$-grams. For example, assume ``New'', ``York'', and ``City'' are three individual tokens and we use all three $n$-gram orders ($1$-gram, $2$-gram, and $3$-gram). Representing the phrase ``New York City'' at the last position requires the $1$-gram embedding of ``City'', the $2$-gram embedding of ``York City'', and the $3$-gram embedding of ``New York City''. This nested structure naturally suggests enforcing shared latent components across embeddings of different $n$-gram orders, which in turn enables training embeddings for higher-order $n$-grams even without encountering them in the training set. This motivates us to ask the following question:

\textit{Can we model the complete $n$-gram spaces without the drawbacks of conventional hash tables, by allowing the embeddings of different $n$-gram orders in the LLM to share common latent structures?}

To answer this question, we propose the \textbf{T}ensorized E\textbf{ngram} (\textbf{TN-gram}), which models the whole $n$-gram space in a tensorized format using compact tensor networks in order to avoid hash collisions introduced by hash tables. More specifically, we represent each $n$-gram order in their Canonical Polyadic (CP) decomposition format and enforce the sharing of CP components across $n$-gram orders. This shared higher-order low-rank structure allows for the training of the embeddings of unseen $n$-grams. Additionally, benefiting from the super-compression capabilities of tensor networks, this drastically reduces the number of parameters needed to model the complete $n$-gram space from $\mathcal{O}(V^n \times d)$, which scales exponentially with the $n$-gram order, to $\mathcal{O}(nVR+dR)$, which scales linearly in $n$ under fixed tensor rank $R$.

In summary, our contributions are as follows:
\begin{itemize}
    \item We introduce \textbf{TN-gram}, a tensorized $n$-gram memory architecture that models the complete $n$-gram space efficiently using CP tensor network representations instead of hash tables.
    \item TN-gram utilizes shared CP factors across the embeddings of different $n$-gram orders, effectively utilizing the intrinsic nested structure of multiple $n$-gram orders, i.e., an $(n-1)$-gram is always contained in the higher-order $n$-grams.
    \item Comprehensive experiments demonstrate that TN-gram achieves comparable or better performance and requires less parameters at the same $n$-gram order compared to hashing-based solutions such as Engram \cite{deepseek_engram}. 
\end{itemize}

\section{Related Works}
\paragraph{Scaling embedding layers and explicit $n$-gram memories.} Recent work has explored scaling language model representations beyond standard token-level embeddings. Over-tokenized Transformer \cite{overtokenized_transformer} expands the embedding table to include higher-gram tokens and then combines their embeddings with token embeddings, showing that local multi-token units can provide useful inductive bias. Engram \cite{deepseek_engram} instead reformulates explicit $n$-gram embedding as conditional memory, extracting suffix $n$-grams ending at the current token for $n$-gram orders $2$ to $N$ through deterministic multi-head hashing. Each extracted $n$-gram is hashed into an order-specific memory table, from which memory vectors are retrieved and projected into Key/Value representations. A context-aware gate compares the Key with the current hidden state and filters the Value before adding it to the residual stream. Similar to \citet{deepseek_engram}, authors in \citet{meituan} also discovered the effectiveness in scaling the embedding layers compared to scaling the experts in the Mixture-of-Experts (MoE) layers. These works motivate embedding scaling as an orthogonal parameter-allocation direction once expert scaling in MoE reaches diminishing returns. However, despite their effectiveness, these approaches are limited by the hashed memory design. Finite lookup tables can cause hash collisions \cite{meituan} , and separate tables for each $n$-gram prevent related contexts from sharing parameters.

\paragraph{Tensor networks for LLMs.}
Tensor network methods are especially useful for achieving an efficient representation of large-scale neural networks due to their super-compressed capabilities. Early work showed that dense fully connected layers can be replaced with Tensor-Train tensor networks, significantly reducing parameters while preserving expressivity \cite{novikov2015tensorizing}. More recently, TensorLLM applies Tucker decomposition for post-training compression and denoising~\citep{Tensorllm}, while TensorGPT compresses embedding layers for deployment on edge-devices \cite{xu2023tensorgpt} using Tensor-Train decomposition. LoRETTA~\citep{loretta} and TeRA~\citep{Tera} use tensorized low-rank updates for parameter-efficient fine-tuning (PEFT), and KromHC~\citep{Kromhc} represents the tensorized residual hyper-connection weights with Kronecker-structured matrices to reduce parameter complexity. Other studies analyze how tensor-network rank affects the efficiency-expressivity tradeoff~\citep{zhou2025understanding} and use LLMs to guide tensor network structure and rank search~\citep{iacovides2025domain}.

\section{Preliminaries}

\paragraph{Tensor notations.}
A tensor is a multi-dimensional array that generalizes vectors and matrices to higher orders \cite{7038247}. A scalar, vector, matrix and tensor is denoted as $a, \mathbf{a}, \mathbf{A}, \mathcal{A}$, respectively. An order-$N$ tensor $\mathcal{T}\in\mathbb{R}^{I_1\times \cdots \times I_N}$ has $N$ modes, with $\mathcal{T}(i_1,\ldots,i_N)$ denoting the entry at index $(i_1,\ldots,i_N)$ \cite{TN1}. The mode-$n$ product multiplies a tensor $\mathcal{T}\in\mathbb{R}^{I_1\times\cdots\times I_N}$ with a matrix $\mathbf{A}\in\mathbb{R}^{J\times I_n}$ along its $n$-th mode, producing a tensor $\mathcal{T}\times_n \mathbf{A} \in \mathbb{R}^{I_1\times\cdots\times I_{n-1}\times J\times I_{n+1}\times\cdots\times I_N}.$ Its element-wise entry is given by 
\begin{equation}
\begin{aligned}
    &(\mathcal{T}\times_n \mathbf{A})(i_1,\ldots,i_{n-1},j,i_{n+1},\ldots,i_N) 
    \\&= \sum_{i_n=1}^{I_n} \mathcal{T}(i_1,\ldots,i_n,\ldots,i_N) \mathbf{A}(j,i_n). 
\end{aligned}
\end{equation}

\paragraph{Canonical Polyadic tensor network decomposition.} A fundamental multi-linear tensor network for representing large tensors is the Canonical Polyadic (CP) decomposition \cite{kolda_tensor}, which approximates $\mathcal{T}$ as a sum of $R$ rank-one components as follows
\begin{equation}
\mathcal{T} = \sum_{r=1}^{R} \mathbf{a}^{(1)}(r)\circ \mathbf{a}^{(2)}(r)\circ \cdots \circ \mathbf{a}^{(N)}(r),
\end{equation}
or in the element-wise format
\begin{equation}
\mathcal{T}(i_1,\ldots,i_N) = \sum_{r=1}^{R} \mathbf{A}^{(1)}(i_1 ,r)\mathbf{A}^{(2)}(i_2 ,r)\cdots \mathbf{A}^{(N)}(i_N ,r),
\end{equation}
where $\circ$ denotes the outer product, $R$ is the tensor rank, $\mathbf{A}^{(k)}\in\mathbb{R}^{I_k\times R}$ is the factor matrix for mode $k$, and $\mathbf{a}^{(k)}_{r}\in\mathbb{R}^{I_k}$ is the $r$-th column of $\mathbf{A}^{(k)}$. CP decomposition reduces the number of parameters from $\prod_{k=1}^{N} I_k$ in the original $\mathcal{T}$ to $\sum_{k=1}^{N} I_kR$, where $R \ll I_k$, achieving super-compression via representing the original tensor in a higher-order low-rank format.

\paragraph{The $n$-gram concept in Engram.}
Traditionally, $n$-gram language models represent local dependencies in discrete token sequences by making a finite-context Markov assumption, where the probability of the next token depends solely on the preceding $n-1$ tokens \cite{manning1999foundations}. While classical $n$-gram models \cite{bojanowski2017enriching, brants2007large, nguyen2024understanding} compute these probabilities through empirical corpus counts, recent efforts such as Engram \cite{deepseek_engram} modernize this concept by formalizing $n$-grams as a conditional memory module for LLMs. More specifically, Engram uses the local context to perform $\mathcal{O}(1)$ lookups \cite{pagnoni2025byte,tito2017hash,yu2025scaling} into an external embedding table. To circumvent the intractable parameter size of the complete combinatorial space, Engram adopts a multi-head hashing strategy. Given a maximum order $N$, Engram includes all context orders $n \in \{2, \ldots, N\}$. For each order $n$ and head $k \in \{1,\ldots,K\}$, a deterministic hash function maps the local context $x_{t,n}$ to an index, retrieving an embedding $\mathbf{e}_{t,n,k}$. The final memory vector $\mathbf{e}_t$ concatenates the retrieved embeddings across all $n$-gram orders and hash heads as follows
\begin{equation}
    \mathbf{e}_t = \mathop{||}_{n=2}^{N} \mathop{||}_{k=1}^{K} \mathbf{e}_{t,n,k},
\end{equation}
where $||$ denotes concatenation and $K$ is the number of hash heads. 

Unlike Over-tokenized Transformer \cite{overtokenized_transformer} which simply expands the embedding vocabulary beyond $1$-gram, Engram introduces a context-aware gating mechanism to filter the retrieved static memory using the hidden state $\mathbf{h}_t$ at the current sequence position $t$. It computes Key and Value vectors via linear projections, i.e., $\mathbf{k}_t = \mathbf{W}_K \mathbf{e}_t$ and $\mathbf{v}_t = \mathbf{W}_V \mathbf{e}_t$, and then computes a scalar gate:
\begin{equation}
    \alpha_t = \sigma \left( \frac{\RMSNorm(\mathbf{h}_t)^\top \RMSNorm(\mathbf{k}_t)}{\sqrt{d}} \right),
    \label{equ:context gating}
\end{equation}
where $\sigma(\cdot)$ is the sigmoid function and $d$ is the representation dimension. The gated output $\alpha_t \mathbf{v}_t$ dynamically modulates the static $n$-gram memory. Although this approach successfully offloads the reconstruction of local patterns from the Transformer layers to external lookups, its reliance on independent hash tables per $n$-gram order fundamentally prevents the sharing of latent structures between nested $n$-gram contexts.

\section{Methodology}
We first review how explicit $n$-gram contexts are extracted and used in Engram-style memory modules. We then introduce our proposed TN-gram, which replaces order-specific hashed lookup tables with a shared tensorized $n$-gram CP representation space. Finally, we describe how the resulting memory modules are integrated into Transformer blocks through context-aware gating and projection absorption. Figure~\ref{fig:architecture} provides an overview of this pipeline and compares the separately hashed memory design of Engram with our proposed shared CP-tensorized memory in TN-gram.

\subsection{Suffix $n$-gram Extraction}
At each position $t$, we extract suffix $n$-grams ending at the current token. Given $x_{1:T}=(x_1,\ldots,x_T)$ and maximum order $N$, the order-$n$ context is
$x_{t-n+1:t}=(x_{t-n+1},\ldots,x_t)$ for $n\in\{2,\ldots,N\}$, with left padding
when the context extends beyond the beginning of the sequence. For example, the
2-gram and 3-gram contexts are $(x_{t-1},x_t)$ and $(x_{t-2},x_{t-1},x_t)$.
Similar to \cite{deepseek_engram}, we consider $n\geq 2$ because 1-gram information is already captured by the standard token embedding. Each such context indexes an $n$-gram latent vector, whose compact parameterization is described next.

\subsection{Tensorizing $n$-gram Embeddings in TN-gram}
The collection of latent vectors indexed by the suffix contexts above can be viewed as an $n$-gram embedding table. For a single $n$-gram order, the original $n$-gram space is $\mathbb{R}^{V^n \times d}$, representing a conventional table that stores one $d$-dimensional vector for each of the $V^n$ possible contexts. Therefore, we can naturally tensorize it to an order-$(n+1)$ tensor $\in \mathbb{R}^{V \times V \times \dots \times V \times d}$ where the first $n$ modes index token positions and the final mode indexes the latent dimension. This tensor view provides a structure that flat hashed tables lack, i.e., suffix contexts of different orders are nested. For example, the $2$-gram, $(x_{t-1}, x_t)$, is the suffix of the $3$-gram, $(x_{t-2},x_{t-1},x_t)$, suggesting that their representations should share latent patterns rather than be learned independently. TN-gram exploits this structure by sharing the CP factors for the tensors of different orders $n \in \{2,\ldots,N\}$. 

\subsection{TN-gram Formulation}\label{lower_order_ngrams_via_full_gram}
\paragraph{Extracting lower-order $n$-grams from the largest-order $N$-gram.}
Let $N$ be the largest $n$-gram order that we wish to include. For any order $n \le N$, TN-gram represents the corresponding embedding tensor using the shared CP factors. More specifically, the embedding tensor for $N$-gram is defined as follows
\begin{equation}
\begin{aligned}
  \mathcal{T}^{(N)} =
  \mathcal{I}
  &\times_1 \mathbf{A}_1 \times_2
  \cdots
  \times_{N-n} \mathbf{A}_{N-n}
  \times_{N-n+1} \\
  &\mathbf{A}_{N-n+1}
  \cdots
  \times_N \mathbf{A}_N
  \times_{N+1} \mathbf{F},
\end{aligned}
\end{equation}
where the order-$(N+1)$ tensor, $\mathcal{T}^{(N)} \in \mathbb{R}^{V\times V \times \cdots \times d}$, is the $N$-gram embedding tensor, $\mathcal{I} \in \mathbb{R}^{R \times \cdots \times R}$ is a super-diagonal identity tensor, and $\times_i$ denotes the mode-$i$ product. The $N$ shared CP factors, $\mathbf{A}_1,\ldots,\mathbf{A}_N \in \mathbb{R}^{V \times R}$, are so-called token-position factor matrices ordered from oldest to newest token, while $\mathbf{F} \in \mathbb{R}^{d \times R}$ is the shared model dimension factor matrix, and $R$ is the tensor rank.

\begin{figure*}
  \centering
  \includegraphics[width=1\linewidth]{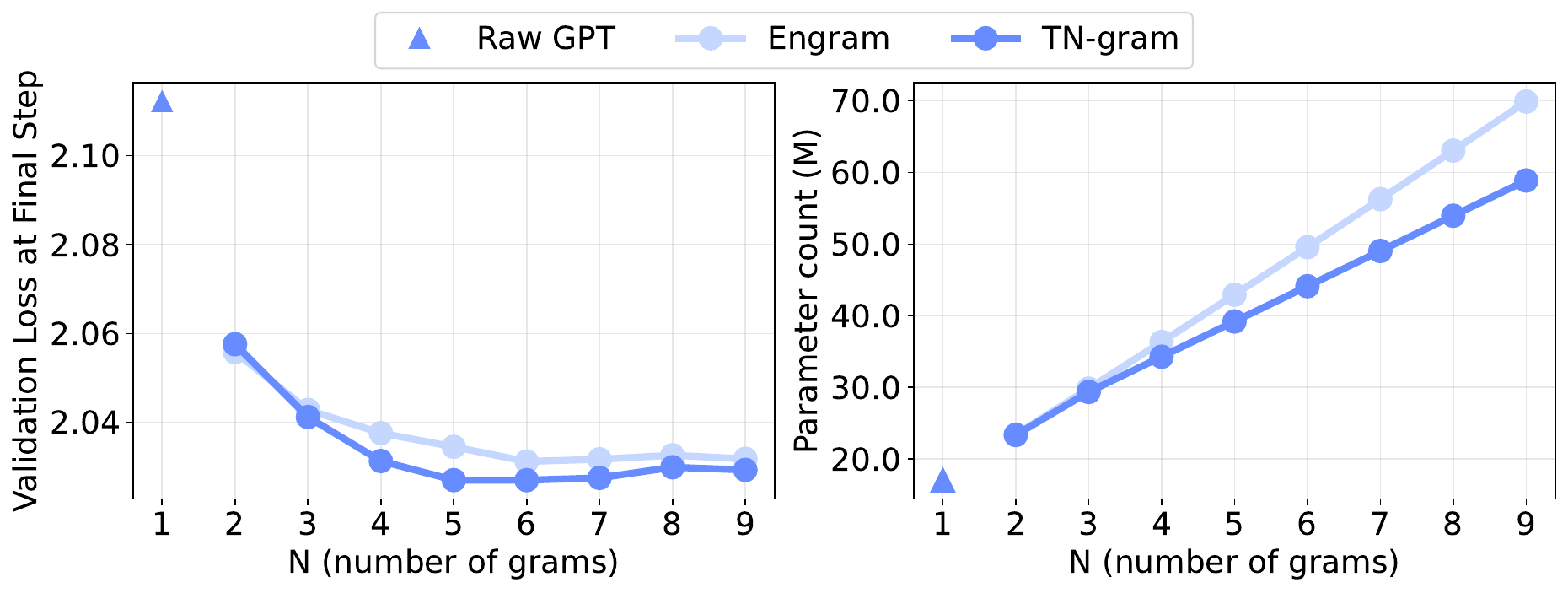}
  \caption{\textbf{Left}: Validation loss of the proposed TN-gram improves as $N$-gram order increases, with most of the gain achieved by 5-gram and little improvement thereafter. \textbf{Right}: Parameter count of both methods grows roughly linearly with $N$, with TN-gram using fewer parameters than the Engram baseline for the same $N$, with a CP tensor rank of $1200$.
  \label{fig:n_gram_order}
}
\end{figure*}

For a lower-order $n$-gram, since the lower-order $n$-gram context simply occupy the last $n$ token modes of the $N$-gram, we use shared vectors, $\hat{\mathbf{w}}_1,\ldots,\hat{\mathbf{w}}_{N-2} \in \mathbb{R}^R$, to obtain its embedding tensor
\begin{equation}
  \mathcal{T}^{(n)} = \mathcal{T}^{(N)}\times_1 \hat{\mathbf{w}}_1 \times_2
  \cdots
  \times_{N-n} \hat{\mathbf{w}}_{N-n}.
\end{equation}
Due to the multilinear property of tensor networks, notice that $\mathbf{A}_k$ can be absorbed into $\hat{\mathbf{w}}_{k}$ for $1\leq k \leq N-n$. In practice, we simply need to learn order-absorption vectors,$\mathbf{w}_{k}$, as $\mathbf{w}_{k} = \mathbf{A}_k \hat{\mathbf{w}}_{k}$. Therefore, the embedding tensor for $n$-gram can be efficiently written with the shared token-position factor matrices, $\mathbf{A}_1,\ldots,\mathbf{A}_N \in \mathbb{R}^{V \times R}$, and shared order-absorption matrices, $\mathbf{w}_1,\ldots,\mathbf{w}_{N-2} \in \mathbb{R}^R$, as follows 
\begin{equation}
\begin{aligned}
\mathcal{T}^{(n)} =
  \mathcal{I}
  &\times_1 \mathbf{w}_1 \times_2 
  \cdots
  \times_{N-n} \mathbf{w}_{N-n}
  \times_{N-n+1} \\
  &\mathbf{A}_{N-n+1}
  \cdots
  \times_N \mathbf{A}_N
  \times_{N+1} \mathbf{F},
\end{aligned}
\label{eq:ngram_embedding_formulation}
\end{equation}

To obtain the embedding for a specific order-$n$ context $(x_1,\ldots,x_n)$, since the CP core $\mathcal{I}$ is super-diagonal, the first $N$ mode-$n$ products in Eq. (\ref{eq:ngram_embedding_formulation}) reduce to an element-wise (Hadamard) product of the selected token factors and the order-absorption vectors, in the form
\begin{equation}
\begin{aligned}
    &\mathbf{b}_n (x_{N-n+1:N}) =
    \mathbf{w}_1 \odot \cdots \odot \mathbf{w}_{N-n}\\
    &\quad \quad \quad \odot\mathbf{A}_{N-n+1}(x_{N-n+1},:)\odot \cdots \odot
    \mathbf{A}_N(x_N,:),
\end{aligned}
\label{eq:elementwise_cp}
\end{equation}
where $\odot$ denotes the Hadamard product and $\mathbf{A}_i[x,:]$ selects the $x^{\text{th}}$ row of $\mathbf{A}_i$.

Under the standard CP-factorized parameterization, the embedding vector would have been obtained by multiplying the vector $\mathbf{b}_n (x_{N-n+1:N}) \in \mathbb{R}^R$ with $\mathbf{F}\in\mathbb{R}^{d\times R}$ as $\mathbf{h}^{(n)} = \mathbf{b}_n \mathbf{F}^\top.$ For numerical stability in practice, we first normalize and rescale the token-space representation as
\begin{equation}
    \mathbf{e}_{n} = s_n \mathrm{RMSNorm}\left( \mathbf{b}_n \right),
\label{eq:rms_scale}
\end{equation}
where $s_n=\exp(l_{n})$ is a learnable scalar for $n$-gram order $n \in \{2,\ldots,N\}$. RMSNorm stabilizes these vectors, while the learned scalar $s_n$ lets the model adaptively weigh $n$-gram contexts of different order. To obtain the embedding vector, we simply need to do $\mathbf{h}^{(n)}=\mathbf{e}_n\mathbf{F}^{\top}\in\mathbb{R}^d$. In Section \ref{sec:Absorbed}, we detail how TN-gram avoids materializing this multiplication by absorbing $\mathbf{F}$ into the subsequent Key/Value projections, allowing the memory module to operate directly on $\mathbf{e}_n$. We ablate these design choices comprehensively in Section~\ref{ablation_components}.

\paragraph{Efficient key and value projections by absorbing $\mathbf{F}$.} 
\label{sec:Absorbed}
We use the Engram-style context-aware gating mechanism~\cite{deepseek_engram} to filter $n$-gram memory using the current hidden state, $\mathbf{h}_t$. Given the token-space representations $\mathbf{e}_2, \ldots, \mathbf{e}_N$ from Eq. (\ref{eq:rms_scale}), a naive implementation would first project each order to model dimension using the CP model dimension factor $\mathbf{F}$, concatenate the resulting embeddings, and then apply Key/Value projections to obtain $\mathbf{d}_{\mathrm{concat}}=\left[ \mathbf{e}_{2}\mathbf{F}^\top \mid \cdots \mid \mathbf{e}_{N}\mathbf{F}^\top
\right]$. For the Key projection, let 
$\mathbf{W}_K= \begin{bmatrix}
    \mathbf{W}_{K,2} \\ 
    \vdots \\
    \mathbf{W}_{K,N}
\end{bmatrix} \in \mathbb{R}^{(N-1)d_{\mathrm{model}}\times d_{\mathrm{model}}},$ 
we have 
\begin{equation}
\begin{aligned}
    \mathbf{k}_t
    =
    \mathbf{d}_{\mathrm{concat}}\mathbf{W}_K
    &=
    \sum_{n=2}^{N}
    \left(\mathbf{e}_{n}\mathbf{F}^\top\right)
    \mathbf{W}_{K,n} \\
    &=
    \sum_{n=2}^{N}
    \mathbf{e}_{n}
    \left(\mathbf{F}^\top\mathbf{W}_{K,n}\right).
\end{aligned}
\end{equation}

Since both multiplications are linear, we can absorb $\mathbf{F}^{\top}\mathbf{W}_{K,n}$ into a single learned matrix $\mathbf{M}_{K,n}\in\mathbb{R}^{R\times d_{\mathrm{model}}}$, and do the same
for the Value projection. In practice, we concatenate the token-space vectors
directly, $\mathbf{e}_{\mathrm{concat}} = \left[\mathbf{e}_2 \mid \cdots \mid \mathbf{e}_N \right] \in \mathbb{R}^{1\times (N-1)R},$ and compute
\begin{equation}
    \mathbf{k}_t = \mathbf{e}_{\mathrm{concat}}\mathbf{M}_K,
    \qquad
    \mathbf{v}_t = \mathbf{e}_{\mathrm{concat}}\mathbf{M}_V,
\end{equation}
where $\mathbf{M}_K,\mathbf{M}_V\in\mathbb{R}^{(N-1)R\times d_{\mathrm{model}}}$. This avoids materializing intermediate matrix multiplications and removes the constraint that each order-specific Key/Value map must have the same model dimension factor $\mathbf{F}$. Finally, TN-gram follows the same Engram context-aware gating mechanism~\cite{deepseek_engram} as detailed in Eq. (\ref{equ:context gating}), where the Value vector, $\mathbf{v}_t$, is filtered by a scalar gate computed from the current hidden state and Key vector, $\mathbf{k}_t$, passed through a depthwise causal convolution, and added to the residual stream. The full gating and convolution equations are given in Appendix~\ref{app:gating}.

\begin{table*}[!ht]
\centering
\setlength{\tabcolsep}{1.3pt}
\caption{Comparison of Engram and TN-gram with raw GPT at 9 and 18 layers.}
\label{tab:loss}
\begin{tabular}{lcccccccc}
\toprule
Method 
& \multicolumn{4}{c}{$L=9,N=5$ $(R=1024)$} 
& \multicolumn{4}{c}{$L=18, N=5$ $(R=1800)$} \\
\cmidrule(lr){2-5} \cmidrule(lr){6-9}
& $\Delta$ Params $\downarrow$
& Train Loss $\downarrow$ 
& Val BPB $\downarrow$ 
& CORE $\uparrow$
& $\Delta$ Params $\downarrow$
& Train Loss $\downarrow$ 
& Val BPB $\downarrow$ 
& CORE $\uparrow$ \\
\midrule
Raw GPT
& -- & 2.209 & 1.251 & 0.06
& -- & 1.792 & 1.086 & 0.107 \\
Engram
& 26M & 2.142 & 1.209 & 0.072
& 60M & 1.771 & \textbf{1.070} &  0.115\\
\rowcolor{blue!10}
TN-gram \textbf{(Ours)}
& \textbf{19M} & \textbf{2.141} & \textbf{1.208} & \textbf{0.083}
& \textbf{48M} & \textbf{1.770} & 1.071 & \textbf{0.120}\\
\bottomrule
\end{tabular}
\label{tab:train_val_loss_core}
\end{table*}

\subsection{Sharing Latents across $n$-gram Embeddings}
The formulation of TN-gram uniquely ensures that the embeddings of a specific $n$-gram are constructed from the shared factors across different $n$-gram orders. We formalize this into Theorem \ref{theo:subspace_inclusion}, which states that the token-space extraction vectors of a $n-1$-gram is a linear combination of the token-space extraction vectors of a $n$-gram.

\begin{theorem}
    \textbf{Shared latents across $n$-grams embeddings.}
    Let $\mathbf{b}_{n}(x_{N-n+1:N})$ denote the unnormalized token-space extraction
    vector of an $n$-gram as defined in Eq.~(\ref{eq:elementwise_cp}), with tokens ordered from oldest to newest. For any $n \in \{3,\ldots,N\}$ where $N$-gram is the largest order, if $\mathbf{A}_{N-n+1}$ has full column rank, the token-space extraction vector of the shorter $(n-1)$-gram context, obtained by dropping the oldest token $x_{N-n+1}$, satisfies
    \begin{equation}
        \mathbf{b}_{n-1}(x_{N-n+2:N})
        =
        \sum_{x_{N-n+1} \in \mathcal{V}}
        c_{x_{N-n+1}}
        \mathbf{b}_{n}(x_{N-n+1:N}),
    \end{equation}
    where $c_{x_{N-n+1}} \in \mathbb{R}$ depends only on
    $\mathbf{w}_{N-n+1}$ and $\mathbf{A}_{N-n+1}$.
    \label{theo:subspace_inclusion}
\end{theorem}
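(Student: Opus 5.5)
From Eq.~(\ref{eq:elementwise_cp}), the two extraction vectors share almost all of their factors. Let $j = N-n+1$. Then
\begin{align*}
\mathbf{b}_{n}(x_{j:N}) &= \mathbf{w}_1\odot\cdots\odot\mathbf{w}_{j-1}\odot \mathbf{A}_{j}(x_{j},:)\odot \mathbf{g},\\
\mathbf{b}_{n-1}(x_{j+1:N}) &= \mathbf{w}_1\odot\cdots\odot\mathbf{w}_{j-1}\odot \mathbf{w}_{j}\odot \mathbf{g},
\end{align*}
where $\mathbf{g} = \mathbf{A}_{j+1}(x_{j+1},:)\odot\cdots\odot\mathbf{A}_N(x_N,:)$ collects the newer token factors. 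Note that $j\le N-2$ because $n\ge 3$, so $\mathbf{w}_j$ is one of the learned order-absorption vectors. The only difference is that $\mathbf{A}_j(x_j,:)$ in the $n$-gram is replaced by $\mathbf{w}_j$ in the $(n-1)$-gram. The plan is therefore to write $\mathbf{w}_j$ as a linear combination of the rows of $\mathbf{A}_j$ and then use the linearity of the Hadamard product in each argument.

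First I establish the key step: there exist coefficients $c\in\mathbb{R}^V$ with $\mathbf{w}_j^\top = c^\top \mathbf{A}_j$, i.e.\ $\mathbf{w}_j = \sum_{x\in\mathcal{V}} c_x \mathbf{A}_j(x,:)^\top$. Since $\mathbf{A}_j\in\mathbb{R}^{V\times R}$ has full column rank $R$ (with $V\ge R$), its row space is all of $\mathbb{R}^R$, so such a $c$ exists. For a canonical choice depending only on $\mathbf{w}_j$ and $\mathbf{A}_j$, I take the minimum-norm solution
\[
c = \mathbf{A}_j(\mathbf{A}_j^\top\mathbf{A}_j)^{-1}\mathbf{w}_j .
\]
The Gram matrix is invertible by the full-rank assumption, and one checks directly that $\mathbf{A}_j^\top c = \mathbf{w}_j$.

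Next, substituting this expansion into $\mathbf{b}_{n-1}$ and using bilinearity of $\odot$ gives
\begin{align*}
\mathbf{b}_{n-1}(x_{j+1:N}) &= \mathbf{w}_1\odot\cdots\odot\mathbf{w}_{j-1}\odot\Big(\textstyle\sum_{x_j} c_{x_j}\mathbf{A}_j(x_j,:)\Big)\odot\mathbf{g}\\
&= \textstyle\sum_{x_j} c_{x_j}\,\mathbf{b}_n(x_{j:N}),
\end{align*}
which is the claim. The coefficients $c_{x_j}$ are independent of the newer tokens $x_{j+1:N}$ and of $\mathbf{w}_1,\dots,\mathbf{w}_{j-1}$, as required.

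The main obstacle is the existence step, since the stated hypothesis is only "full column rank". I will make explicit that this means rank $R$ and hence surjectivity of $\mathbf{A}_j^\top:\mathbb{R}^V\to\mathbb{R}^R$, which uses $V\ge R$. I will also remark that the $\mathbf{w}_k$ here are the absorbed vectors $\mathbf{A}_k\hat{\mathbf{w}}_k$ only up to the paper's identification. Under the literal absorption convention $\mathbf{w}_j = \mathbf{A}_j^\top\hat{\mathbf{w}}_j$, one can instead take $c=\hat{\mathbf{w}}_j$ directly, and the rank assumption is not needed. I would state both routes.
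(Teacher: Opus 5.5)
Your proof is correct and follows essentially the same route as the paper's: you factor out the shared suffix vector (the paper's $\mathbf{q}$, your $\mathbf{w}_1\odot\cdots\odot\mathbf{w}_{j-1}\odot\mathbf{g}$), use full column rank of $\mathbf{A}_{N-n+1}$ to write $\mathbf{w}_{N-n+1}$ as a combination of its rows, and distribute using bilinearity of $\odot$. Your explicit minimum-norm choice $c=\mathbf{A}_j(\mathbf{A}_j^\top\mathbf{A}_j)^{-1}\mathbf{w}_j$ is a small improvement, since it makes precise the claim that the coefficients depend only on $\mathbf{w}_{N-n+1}$ and $\mathbf{A}_{N-n+1}$, whereas the paper only asserts that such coefficients exist.
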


\section{Experiments}\label{experiments}
To evaluate the performance of our proposed TN-gram method, we conducted experiments on LLM pretraining at two scales, i.e., 9 transformer layers and 18 transformer layers. Our experimental setup builds on Parameter-Golf \cite{param_golf}. All models were trained on data from FineWeb \cite{fineweb}. Our results show that TN-gram achieves performance on par or better than Engram across all evaluated $n$-gram orders, while requiring fewer trainable parameters.

\paragraph{Effect of $n$-gram Order.}
Figure~\ref{fig:n_gram_order} depicts how validation loss and parameter count vary with the maximum $n$-gram order $N$. Validation loss generally improves as $N$ increases, with most gains achieved by $N=5$ and only marginal improvements and sometimes degradations thereafter. 

An exception occurs at $N=2$, where TN-gram has slightly higher validation loss than Engram. This is expected, since the main advantage of TN-gram comes from sharing latent factors across nested $n$-gram orders, i.e., using lower-order suffixes to support higher-order contexts. When $N=2$, there is only one explicit $n$-gram order, which means the cross-order sharing mechanism is absent. In this case, TN-gram behaves as a CP-factorized bigram memory, whereas Engram can assign more flexible independent embeddings through its hashed lookup table. Moreover, $2$-gram contexts are much denser and better observed than higher-order $n$-grams, so Engram suffers less from the occurrence sparsity limitations that become more significant at larger $N$.

As $N$ increases, TN-gram can exploit its shared factors and order-absorption vectors more effectively, which leads to better validation performance. Since most of the performance gains are achieved by $N=5$, with only marginal improvements beyond that, we use $N=5$ for the main comparisons in Tables~\ref{tab:train_val_loss_core} - \ref{tab:lm_bigbench}.

\paragraph{Accuracy-Efficiency Tradeoff.} Table~\ref{tab:train_val_loss_core} compares Engram and TN-gram against a raw GPT baseline at two transformer depths. We report four metrics: (1) the number of additional parameters relative to the raw GPT; (2) training loss; (3) validation bits-per-byte (BPB); and (4) the CORE score on downstream tasks. BPB is computed as
$\mathcal{L}_{\text{BPB}}=\frac{\mathcal{L}_{\text{cross-entropy}}}{\ln(2)}\times \frac{\text{Total tokens}}{\text{Total bytes}}$, while CORE score aggregates downstream task accuracy (See more details in Appendix \ref{downstream_tasks}).

\begin{table*}[ht!]
\centering
\caption{Downstream performance comparison on multiple-choice tasks. Accuracy is reported. Best result per task is bolded.}
\label{tab:downstream-mc}
\setlength{\tabcolsep}{2.3pt}
\begin{tabular}{l l cccccccccccc}
\toprule
$L$ & Method & Avg & HS-ZS & ARC-E & ARC-C & COPA & CSQA & PIQA & OBQA & HS & LSAT-AR & BoolQ & LangID \\
\midrule
\multirow{3}{*}{9}
& Raw GPT & 33.48 & 30.00 & 30.89 & 19.45 & 52.00 & 20.72 & 57.40 & 20.40 & 29.31 & 23.48 & 59.17 & \textbf{25.51} \\
& Engram & 34.14 & 30.14 & \textbf{31.52} & \textbf{20.48} & \textbf{55.00} & 21.13 & 57.51 & 23.80 & \textbf{29.87} & 22.61 & 58.17 & 25.36 \\
\rowcolor{blue!10}
& TN-gram & \textbf{35.44} & \textbf{30.23} & 31.10 & 19.45 & 54.00 & \textbf{30.71} & \textbf{57.62} & \textbf{24.40} & 29.75 & \textbf{25.22} & \textbf{61.99} & 25.35 \\

\midrule
\multirow{3}{*}{18}
& Raw GPT & 36.85 & 34.57 & 34.05 & \textbf{22.70} & 58.00 & 26.78 & 62.02 & 22.80 & 33.99 & 24.35 & 60.55 & 25.50 \\
& Engram & 37.16 & \textbf{35.56} & 35.90 & 22.44 & \textbf{60.00} & 26.13 & 60.99 & 22.20 & \textbf{34.60} & 23.04 & \textbf{62.14} & \textbf{25.72} \\
\rowcolor{blue!10}
& TN-gram & \textbf{37.82} & 35.05 & \textbf{36.07} & 22.44 & \textbf{60.00} & \textbf{28.58} & \textbf{63.00} & \textbf{23.60} & 34.41 & \textbf{25.22} & 62.08 & 25.53 \\
\bottomrule
\end{tabular}
\label{tab:multiple_choice}
\end{table*}

\begin{table*}[ht!]
\centering
\caption{Downstream performance comparison on language-modeling and BIG-bench tasks. Accuracy is reported. Best result per task is bolded.}
\label{tab:downstream-lm}
\setlength{\tabcolsep}{2.5pt}
\begin{tabular}{llcccccccccc}
\toprule
$L$ & Method & Avg & BBH-QA & LAMB & BBH-Dyck & BBH-CS & BBH-Op & SQuAD & CoQA & WG & WGrande  \\
\midrule
\multirow{3}{*}{9}
& Raw GPT & 21.68 & 3.23 & 20.53 & 13.40 & 38.41 & \textbf{13.33} & 0.34 & 2.00 & 53.48 & 50.36 \\
& Engram & 22.43 & 4.46 & 22.94 & \textbf{13.90} & \textbf{38.94} & 11.90 & 0.45 & 5.45 & 53.48 & 50.36 \\
\rowcolor{blue!10}
& TN-gram & \textbf{22.72} & \textbf{6.29} & \textbf{23.31} & \textbf{13.90} & 37.42 & 12.38 & \textbf{0.79} & \textbf{5.93} & \textbf{53.85} & \textbf{50.59}\\

\midrule
\multirow{3}{*}{18}
& Raw GPT & 25.35 & 9.05 & 28.64 & \textbf{16.90} & 39.77 & 15.71 & 0.96 & 7.95 & 57.51 & 51.62 \\
& Engram & 26.05 & \textbf{13.99} & \textbf{32.04} & 14.80 & 40.98 & 12.38 & 0.92 & 8.18 & \textbf{60.07} & 51.07 \\
\rowcolor{blue!10}
& TN-gram & \textbf{26.29} & 12.32 & 31.83 & 15.20 & \textbf{41.36} & \textbf{16.19} & \textbf{1.35} & \textbf{8.92} & 56.41 & \textbf{53.04} \\
\bottomrule
\end{tabular}
\label{tab:lm_bigbench}
\end{table*}

For a fair comparison, we chose the tensor rank in TN-gram to be $R=1024$ and $R=1800$ for the 9- and 18-layer models, respectively, so that TN-gram has less number of additional parameters to Engram. Also, both Engram and TN-gram use a 5-gram configuration. Across both depths, TN-gram achieved the best CORE score while adding fewer parameters than Engram. For 9 layers, it also obtained the best validation BPB, reducing BPB from $1.251$ to $1.208$. For 18 layers, it matches Engram in validation BPB ($1.071$ vs. $1.070$) while improving CORE from $0.107$ to $0.120$. These results suggest that TN-gram provides a stronger accuracy-efficiency tradeoff, since its shared CP-factorized memory couples well-observed lower-order $n$-grams with sparser higher-order ones, allowing latent structure from frequent patterns to inform the representations of rarer long-context patterns.

\paragraph{Downstream Evaluation.}
Tables~\ref{tab:multiple_choice} and~\ref{tab:lm_bigbench} compare Raw GPT, Engram, and TN-gram on downstream tasks. Across both 9- and 18-layer settings, TN-gram consistently improves average downstream accuracy over the raw GPT baseline and Engram. On multiple-choice tasks, TN-gram achieves the best average accuracy, improving from $33.48$ to $35.44$ for 9 layers and from $36.85$ to $37.82$ for 18 layers. Similar gains were observed on language-modeling and BIG-bench tasks, where TN-gram obtains the highest average score at both depths. These results suggest that TN-gram improves both validation loss and downstream generalization.

\paragraph{Effect of Tensor Rank}

We next investigate whether additional explicit-memory capacity is more useful when allocated to independently hashed memories or to the shared tensorized latent space. For Engram, we scale capacity by increasing the number of hashing heads, which adds more order-specific lookup tables. For our TN-gram, we scale capacity by increasing the CP tensor rank $R$, which enlarges the number of shared latent components used to construct all $n$-gram orders.

\begin{figure}[ht!]
  \centering
  \includegraphics[width=1\linewidth]{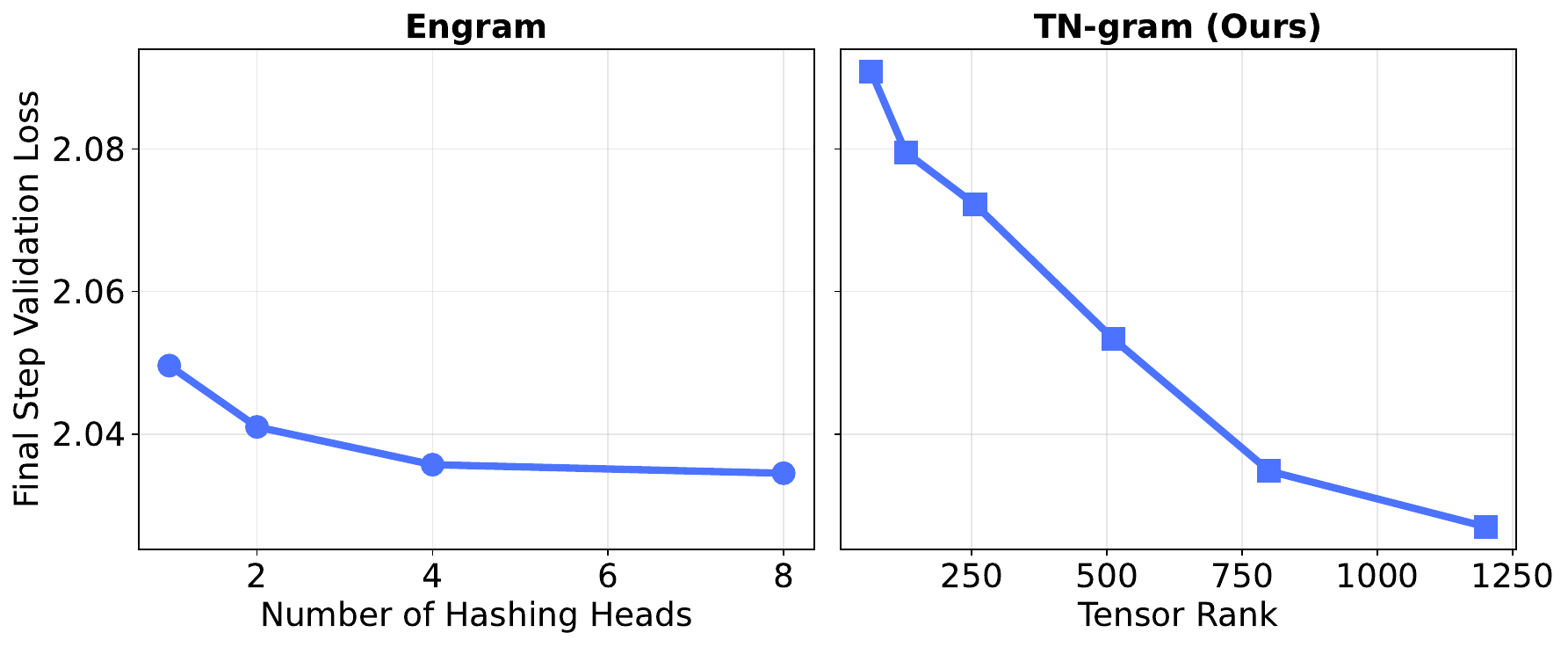}
  \caption{Scaling explicit $n$-gram memory capacity. \textbf{Left}: Increasing the number of hashing heads in Engram yields only limited validation-loss improvement. \textbf{Right}: increasing the CP tensor rank $R$ in TN-gram consistently improves validation loss.}
  \label{fig:hash_heads_cp_rank_scaling}
\end{figure}

Figure \ref{fig:hash_heads_cp_rank_scaling} shows a clear difference between two scaling directions. Increasing the number of Engram hashing heads yields modest improvements in validation loss, suggesting that duplicating independent hash memories has diminishing returns. This is consistent with Engram's design, in which additional heads increase the number of retrieved slots, but do not leverage the shared latent structure across nested $n$-gram contexts.

By contrast, increasing the CP rank in TN-gram produces a more consistent reduction in validation loss. A larger rank directly increases the expressivity of the shared tensorized $n$-gram space, while slightly increasing the parameter count. More importantly, higher rank improves the shared latent components from which different $n$-gram orders are composed, rather than simply adding more isolated memories.

\begin{figure*}
  \centering
  \includegraphics[width=0.85\linewidth]{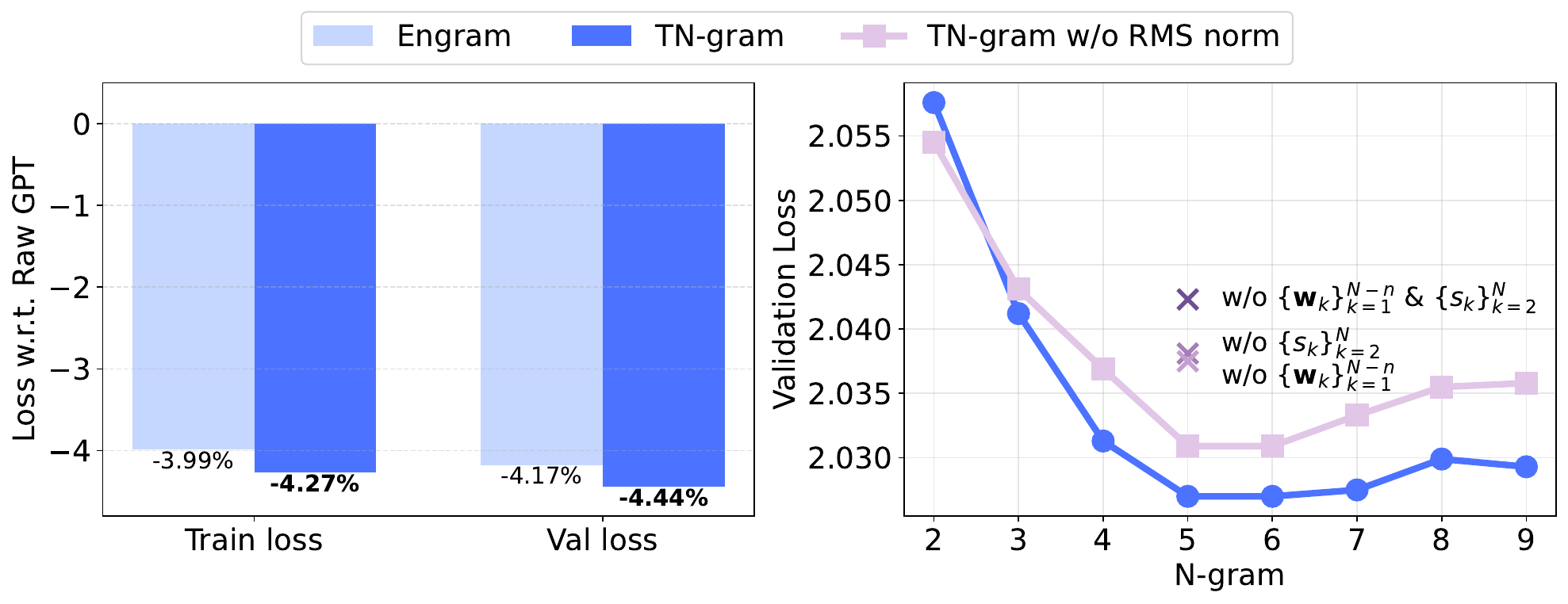}
  \caption{ Ablation Study. \textbf{Left}: Loss reduction relative to raw GPT for Engram and TN-gram for the vocabulary size 8192 with all $5$-grams. \textbf{Right}: Validation loss of $N$-gram order for TN-gram ablations, showing effects of removing RMS normalization, learnable scale $\{s_k\}_{k=2}^N$, and order-absorption vectors $\{\mathbf{w}_k\}_{k=1}^{N-n}$.}
  \label{fig:ablation}
\end{figure*}

\subsection{Ablation Study}

\paragraph{Effect of vocabulary size.}
Experiments in Tables \ref{tab:loss}, \ref{tab:multiple_choice}, \ref{tab:lm_bigbench}, and Figure \ref{fig:n_gram_order} use vocabulary size $1024$. To investigate whether the advantages of TN-gram scale to a larger vocabulary, Figure~\ref{fig:ablation} (left) compares Engram and TN-gram at vocabulary size $8192$, reporting loss reduction relative to the raw GPT baseline. Both methods improve training and validation loss, confirming that explicit $N$-gram modeling remains beneficial even at higher vocabulary sizes. TN-gram yields larger reductions than Engram, improving training loss by $4.27\%$ and validation loss by $4.44\%$, compared to $3.99\%$ and $4.17\%$ for Engram. This suggests that TN-gram provides a more effective parameterization of $N$-gram statistics even as the vocabulary size increases.

\paragraph{Ablation of TN-gram components.}\label{ablation_components}
Section~\ref{lower_order_ngrams_via_full_gram} introduces three components for stabilizing the learnt TN-gram representations: RMS normalization, learnable per-order scaling $\{s_k\}_{k=2}^N$, and learnable order-absorption vectors $\{\mathbf{w}_k\}_{k=1}^{N-n}$ for unspecified context modes. We ablate these components to assess their individual contributions. As shown in Figure~\ref{fig:ablation} (right), each removal increases validation loss, confirming that all three components are beneficial. In particular, removing the learnable scales has a larger effect than removing the order-absorption vectors alone, while removing both leads to the worst performance. The variant without RMS normalization is consistently worse than the full TN-gram across $n$-gram orders, highlighting the importance of normalization for stabilizing CP-based $n$-gram representations.

\section{Conclusion}

We have introduced the \textbf{T}ensorized E\textbf{ngram} (\textbf{TN-gram}), a compact explicit $n$-gram memory module for incorporating local multi-token context into Transformer blocks. Unlike Engram-style approaches that use separate hash tables for each $n$-gram order, TN-gram represents the tensorized $n$-gram spaces with shared CP factors. This design removes hash collision and, more importantly, enables lower- and higher-order $n$-gram contexts to share latent structure through common token-position factors and learnable order-absorption vectors. We anticipate that this work will open future avenues for explorations in tensorized $n$-gram memories.

\paragraph{Limitations and future work.}

TN-gram improves parameter efficiency, but its current implementation is not yet fully optimized for wall-clock speed. Compared with hash-based Engram, TN-gram performs additional structured operations such as factor gathering, Hadamard products, normalization, and scaling. These operations are simple and highly parallelizable, and we expect the remaining overhead to be substantially reduced with customized CUDA kernels. Another practical consideration is the choice of tensor rank $R$. In our experiments, $R$ is selected to match Engram's parameter budget, but the best rank may vary with model scale, vocabulary size, maximum $n$-gram order, and compute constraints. Since tensor rank selection is a challenging and active research area, developing adaptive rank-selection strategies for TN-gram is an open direction for future work.

\section*{Impact Statement}
The proposed TN-gram may have positive societal impact by improving parameter efficiency for explicit $n$-gram memories, potentially contributing to more efficient and sustainable language models. By compactly modeling recurring multi-token patterns, it may reduce the parameter overhead in the neural network layers.

\bibliography{example_paper}
\bibliographystyle{icml2026}

\newpage
\appendix
\onecolumn
\section{Proof of Theorem 1}\label{proof_theorem_1}
\setcounter{theorem}{0}
\begin{theorem}
    \textbf{Shared Latents Across $n$-grams.}
    Let $\mathbf{b}_{n}(x_{N-n+1:N})$ denote the unnormalized token-space extraction
    vector of an $n$-gram as defined in Eq.~(\ref{eq:elementwise_cp}), with tokens ordered from oldest to newest. For any $n \in \{3,\ldots,N\}$ where $N$-gram is the largest order, if $\mathbf{A}_{N-n+1}$ has full column rank, the token-space extraction vector of the shorter $(n-1)$-gram context, obtained by dropping the oldest token $x_{N-n+1}$, satisfies
    \begin{equation}
        \mathbf{b}_{n-1}(x_{N-n+2:N})
        =
        \sum_{x_{N-n+1} \in \mathcal{V}}
        c_{x_{N-n+1}}
        \mathbf{b}_{n}(x_{N-n+1:N}),
    \end{equation}
    where $c_{x_{N-n+1}} \in \mathbb{R}$ depends only on
    $\mathbf{w}_{N-n+1}$ and $\mathbf{A}_{N-n+1}$.
    \label{theo:subspace_inclusion_appendix}
\end{theorem}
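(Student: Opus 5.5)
The plan is to reduce the statement to a single linear-algebra fact: the order-absorption vector $\mathbf{w}_{N-n+1}$ lies in the row space of $\mathbf{A}_{N-n+1}$. Everything else follows from the fact that the Hadamard product is linear in each argument. Throughout, I take $\mathcal{V}$ to be the index set $\{1,\ldots,V\}$ of the rows of $\mathbf{A}_{N-n+1}$.

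\textbf{Step 1: factor out the common part.} Write both vectors in the form of Eq.~(\ref{eq:elementwise_cp}). Define
\begin{equation}
\mathbf{u}=\mathbf{w}_1\odot\cdots\odot\mathbf{w}_{N-n},
\qquad
\mathbf{p}=\mathbf{A}_{N-n+2}(x_{N-n+2},:)\odot\cdots\odot\mathbf{A}_N(x_N,:),
\end{equation}
with $\mathbf{u}=\mathbf{1}$ when $n=N$. Then
\begin{equation}
\mathbf{b}_n(x_{N-n+1:N})=\mathbf{u}\odot\mathbf{A}_{N-n+1}(x_{N-n+1},:)\odot\mathbf{p},
\qquad
\mathbf{b}_{n-1}(x_{N-n+2:N})=\mathbf{u}\odot\mathbf{w}_{N-n+1}\odot\mathbf{p}.
\end{equation}
I will check the index bookkeeping carefully here. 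For the $(n-1)$-gram, the absorption vectors are $\mathbf{w}_1,\ldots,\mathbf{w}_{N-n+1}$, so the only difference from the $n$-gram is in mode $N-n+1$. There the token row $\mathbf{A}_{N-n+1}(x_{N-n+1},:)$ is replaced by $\mathbf{w}_{N-n+1}$. The requirement $n\ge 3$ guarantees that $\mathbf{w}_{N-n+1}$ is one of the defined vectors $\mathbf{w}_1,\ldots,\mathbf{w}_{N-2}$.

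\textbf{Step 2: express $\mathbf{w}_{N-n+1}$ through the rows of $\mathbf{A}_{N-n+1}$.} The goal is to find $\mathbf{c}\in\mathbb{R}^V$ with
\begin{equation}
\mathbf{w}_{N-n+1}=\sum_{x\in\mathcal{V}}c_x\,\mathbf{A}_{N-n+1}(x,:),
\qquad\text{i.e.}\qquad
\mathbf{A}_{N-n+1}^\top\mathbf{c}=\mathbf{w}_{N-n+1}.
\end{equation}
This is where the hypothesis is used. Since $\mathbf{A}_{N-n+1}\in\mathbb{R}^{V\times R}$ has full column rank $R$, which implicitly requires $V\ge R$, the map $\mathbf{A}_{N-n+1}^\top:\mathbb{R}^V\to\mathbb{R}^R$ has rank $R$ and is therefore surjective. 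A solution therefore exists. To make the dependence explicit, I will commit to the minimum-norm choice
\begin{equation}
\mathbf{c}=\mathbf{A}_{N-n+1}\bigl(\mathbf{A}_{N-n+1}^\top\mathbf{A}_{N-n+1}\bigr)^{-1}\mathbf{w}_{N-n+1},
\end{equation}
where the Gram matrix is invertible precisely because of full column rank. This $\mathbf{c}$ depends only on $\mathbf{w}_{N-n+1}$ and $\mathbf{A}_{N-n+1}$, as the statement requires, and in particular not on $x_{N-n+2:N}$.

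\textbf{Step 3: substitute and use linearity.} Plug the expansion of $\mathbf{w}_{N-n+1}$ into $\mathbf{b}_{n-1}$. Because $\mathbf{v}\mapsto\mathbf{u}\odot\mathbf{v}\odot\mathbf{p}$ is linear, the sum and the coefficients move outside:
\begin{equation}
\mathbf{b}_{n-1}(x_{N-n+2:N})=\sum_{x\in\mathcal{V}}c_x\,\mathbf{u}\odot\mathbf{A}_{N-n+1}(x,:)\odot\mathbf{p}=\sum_{x\in\mathcal{V}}c_x\,\mathbf{b}_n(x,x_{N-n+2:N}).
\end{equation}
This is exactly the claimed identity.

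The only non-mechanical step is Step 2, namely the existence and the dependence structure of $\mathbf{c}$. The subtle point is that the paper's informal relation $\mathbf{w}_k=\mathbf{A}_k\hat{\mathbf{w}}_k$ is dimensionally inconsistent with $\mathbf{w}_k\in\mathbb{R}^R$, so I will not rely on it. Instead I will argue purely from the surjectivity of $\mathbf{A}_{N-n+1}^\top$. I will also remark that $\mathbf{c}$ is unique only when $V=R$; for $V>R$ any solution of $\mathbf{A}_{N-n+1}^\top\mathbf{c}=\mathbf{w}_{N-n+1}$ works, and the pseudoinverse choice fixes one canonically.
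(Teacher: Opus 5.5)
Your proof is correct and follows the paper's argument: factor out the part common to both Hadamard products (the paper combines your $\mathbf{u}$ and $\mathbf{p}$ into a single vector $\mathbf{q}$), use the full column rank of $\mathbf{A}_{N-n+1}$ to write $\mathbf{w}_{N-n+1}$ as a combination of its rows, and distribute by linearity. Your explicit pseudoinverse choice of $\mathbf{c}$ is a small improvement: the paper only asserts that suitable coefficients exist, whereas your formula shows directly that they depend only on $\mathbf{w}_{N-n+1}$ and $\mathbf{A}_{N-n+1}$.
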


\begin{proof}
Let $n \in \{3,\ldots,N\}$ and let $\mathbf{A} = \mathbf{A}_{N-n+1}\in\mathbb{R}^{V\times R}$ and $\mathbf{w} = \mathbf{w}_{N-n+1}\in\mathbb{R}^R$. We define the shared suffix vector as follows
\begin{equation}
    \mathbf{q}(x_{N-n+2:N})
    :=
    \mathbf{w}_1 \odot \cdots \odot \mathbf{w}_{N-n}
    \odot
    \mathbf{A}_{N-n+2}(x_{N-n+2},:)
    \odot \cdots \odot
    \mathbf{A}_{N}(x_N,:)
    \;\in \mathbb{R}^R,
\end{equation}
which collects every factor shared by orders $n$ and $n{-}1$. Based on Eq.~(\ref{eq:elementwise_cp}), we can write the two extraction vectors as
\begin{equation}
    \mathbf{b}_n(x_{N-n+1:N})
    =
    \mathbf{A}(x_{N-n+1},:) \odot \mathbf{q}(x_{N-n+2:N}), \ 
    \mathbf{b}_{n-1}(x_{N-n+2:N})
    =
    \mathbf{w} \odot \mathbf{q}(x_{N-n+2:N}).
\end{equation}
Since $\mathbf{A}$ has full column rank, its rows span $\mathbb{R}^R$. Therefore, there exist coefficients $\{c_{x_{N-n+1}}\}_{x_{N-n+1}\in\mathcal{V}}$ satisfying $\mathbf{w} = \sum_{x_{N-n+1}\in\mathcal{V}} c_{x_{N-n+1}}\,\mathbf{A}(x_{N-n+1},:)$. Writing out the Hadamard product gives
\begin{equation}
\begin{aligned}
        \mathbf{b}_{n-1}(x_{N-n+2:N})
    =
    \mathbf{w} \odot \mathbf{q}
    &=
    \biggl(\sum_{x_{N-n+1}\in\mathcal{V}} c_{x_{N-n+1}}\,\mathbf{A}(x_{N-n+1},:)\biggr) \odot \mathbf{q}\\
    &=
    \sum_{x_{N-n+1}\in\mathcal{V}} c_{x_{N-n+1}}\,
    \underbrace{\mathbf{A}(x_{N-n+1},:) \odot \mathbf{q}}_{\mathbf{b}_n(x_{N-n+1:N})},
\end{aligned}
\end{equation}
where $c_{x_{N-n+1}}$ depends only on $\mathbf{w}=\mathbf{w}_{N-n+1}$ and $\mathbf{A}=\mathbf{A}_{N-n+1}$.
\end{proof}

\section{Downstream Tasks}\label{downstream_tasks}

We evaluate downstream task performance using a broad suite of tasks covering commonsense reasoning, scientific question answering, reading comprehension, language modeling, symbolic reasoning, and factual knowledge. The multiple-choice group includes HellaSwag \cite{zellers2019hellaswag} zero-shot and 10-shot variants for grounded commonsense completion, ARC-Easy and ARC-Challenge \cite{clark2018think} for elementary and challenging scientific reasoning, COPA \cite{gordon-etal-2012-semeval} and CommonsenseQA \cite{talmor-etal-2019-commonsenseqa} for causal and commonsense inference, PIQA \cite{bisk2020piqa} for physical commonsense, OpenBookQA \cite{mihaylov2018can} for open-domain science facts, AGI-Eval LSAT-AR \cite{zhong2024agieval} for analytical reasoning, BoolQ \cite{clark-etal-2019-boolq} for yes/no question answering, and BigBench \cite{srivastava2023beyond} language identification. 

The language-modeling and generation-style group includes LAMBADA \cite{paperno-etal-2016-lambada} for long-range next-word prediction, SQuAD \cite{rajpurkar-etal-2016-squad} and CoQA \cite{reddy-etal-2019-coqa} for extractive and conversational question answering, Winograd \cite{levesque2012winograd} and Winogrande \cite{sakaguchi2021winogrande} for pronoun and commonsense coreference resolution, and BigBench \cite{srivastava2023beyond} tasks such as Wikidata QA, Dyck languages, CS algorithms, operators, and repeat-copy logic, which test factual recall, formal-language structure, algorithmic reasoning, symbolic operations, and sequence-copying ability. 

Following DataComp-LM \cite{li2024datacomp}, we report CORE as an aggregate downstream metric. CORE averages the centered accuracy across tasks, where each task score is normalized so that random guessing corresponds to 0 and perfect accuracy corresponds to 1. This makes results more comparable across tasks with different chance levels.

Together, these benchmarks test whether TN-gram’s validation-loss improvements translate into broader downstream generalization. We report both individual task accuracies and CORE, using CORE as a compact summary of overall downstream performance.

\section{Scaling $n$-gram Order in a 19-Layer Transformer}

To evaluate the scaling performance of our proposed TN-gram, we conducted experiments using 18-layer transformers with embedding dimension 1024, and trained it for 10,000 steps. The validation loss and the CORE score at the final step are shown in Figure \ref{fig:larger_model}.

\begin{figure}
  \centering
  \includegraphics[width=\linewidth]{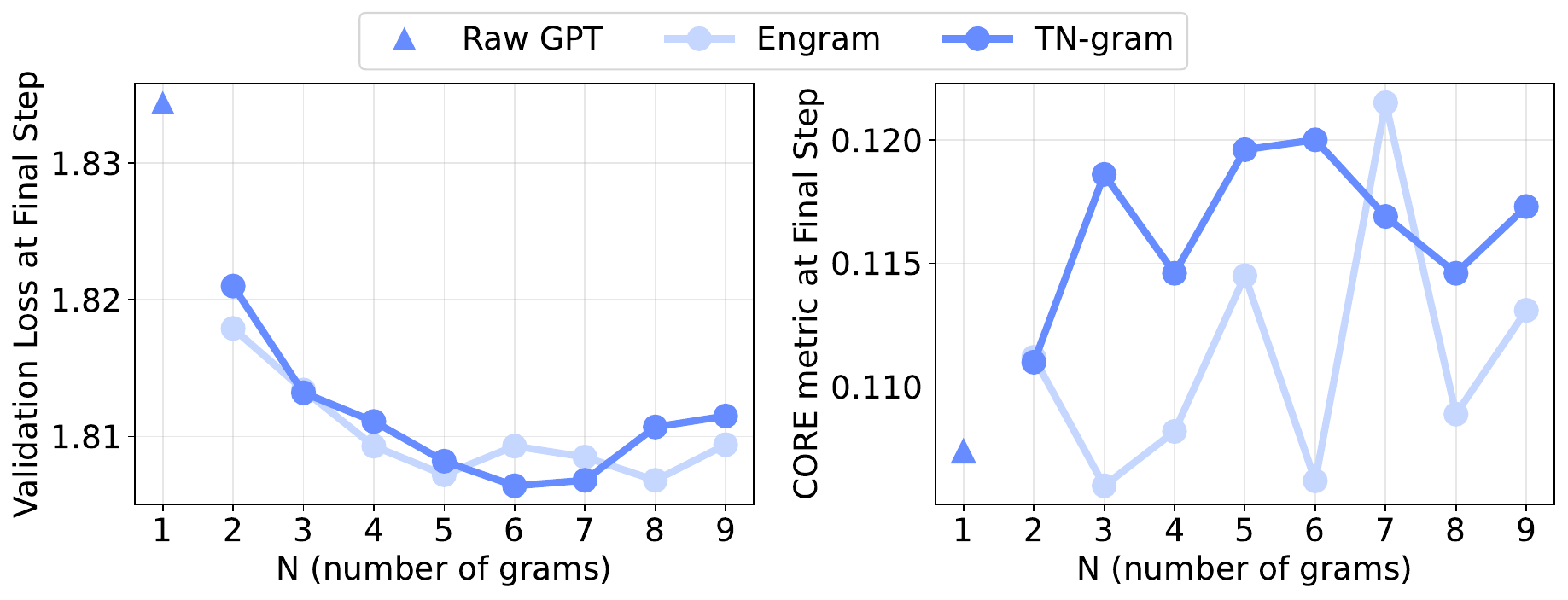}
  \caption{Final-step performance of 18-layer transformer models across $n$-gram orders $N$. \textbf{Left}: TN-gram yields the lowest validation loss than Engram and raw GPT baseline. The tensor rank in TN-gram is configured such that it always has less parameters than Engram. \textbf{Right}: TN-gram achieves higher CORE score, with strongest gain at larger $N$.}
  \label{fig:larger_model}
\end{figure}

\section{Context-Aware Gating}\label{app:gating}

This section details the Engram-style \cite{deepseek_engram} context-aware gating mechanism used to inject TN-gram memory into the Transformer. At position $t$, let $\mathbf{h}_t \in \mathbb{R}^{d_{\mathrm{model}}}$ denote the current hidden state, and let $\mathbf{k}_t,\mathbf{v}_t \in \mathbb{R}^{d_{\mathrm{model}}}$ denote the memory Key and Value vectors produced from the extracted $n$-gram representations. The gate compares the current context with the memory Key and suppresses memory values that are poorly aligned with the hidden state.

We first compute a normalized alignment score and transform it into a scalar gate:
\begin{equation}
    \alpha_t =
    \frac{
    \mathrm{RMSNorm}(\mathbf{h}_t)^\top
    \mathrm{RMSNorm}(\mathbf{k}_t)
    }{\sqrt{d_{\mathrm{model}}}},
    \qquad
    g_t =
    \sigma\!\left(
    \operatorname{sign}(\alpha_t)
    \sqrt{\max(|\alpha_t|,\epsilon)}
    \right),
\end{equation}
where $\sigma(\cdot)$ is the sigmoid function and $\epsilon=10^{-6}$. The signed square-root transformation reduces the magnitude of extreme dot products while preserving their sign. The gated memory is $\tilde{\mathbf{v}}_t=g_t\mathbf{v}_t$.

Finally, following Engram~\citep{deepseek_engram}, we apply a short depthwise causal convolution to the sequence of gated values before adding the memory output to the residual stream. Let $\tilde{\mathbf{v}}$ denote the sequence of gated values over time. With kernel size $w=3$, dilation $\delta=N$, and SiLU activation, the memory output added to the Transformer hidden state is
\begin{equation}
    \mathbf{y}_t =
    \tilde{\mathbf{v}}_t
    +
    \mathrm{SiLU}\!\left(
    \operatorname{Conv1D}
    \left(
    \mathrm{RMSNorm}(\tilde{\mathbf{v}})
    \right)
    \right)_t .
\end{equation}

\section{Hyperparameters}

Table~\ref{tab:shared-hparams} reports the shared training and model hyperparameters. 
Unless otherwise noted, all models are trained for 6000 steps on length-1024 sequences with a global batch size of 524{,}288 tokens. Depending on the size of the model, training was performed using either 4 $\times$ NVIDIA A100 (80GB) GPUs or a single L40S GPU (48GB). The backbone is a 9-layer Transformer with hidden size 512, 8 query heads, 4 KV heads, RoPE base 10{,}000, tied input/output embeddings, and a two-layer MLP with expansion factor 2. 
Engram and TN-gram insert Engram memory modules in layers 1 and 7.

\begin{table}[ht]
\centering
\caption{Shared hyperparameters.}
\label{tab:shared-hparams}
\begin{tabular}{ll}
\toprule
Name & Value \\
\midrule
Sequence length & 1024 \\
Global batch size & 524{,}288 tokens \\
Training steps & 6000 \\
Transformer blocks & 9 \\
Hidden dimension & 512 \\
Attention heads / KV heads & 8 / 4 \\
MLP expansion factor & 2 \\
RoPE base & 10{,}000 \\
Tied embeddings & Yes \\
Engram layers & 1, 7 \\
Random seed & 1337 \\
\bottomrule
\end{tabular}
\end{table}

We use Muon~\cite{jordan2024muon} for matrix-shaped Transformer parameters and Adam~\cite{kingma2014adam} for embeddings, scalar/control parameters, and the LM head. 
Muon uses learning rate 0.04, Nesterov momentum ramped from 0.85 to 0.95 over the first 500 steps, and 5 Newton--Schulz steps for orthogonalization. 
Adam uses $\beta_1=0.9$, $\beta_2=0.95$, and $\epsilon=10^{-8}$. 
For tied embeddings, the shared embedding table uses learning rate 0.05. Scalar/control parameters use learning rate 0.04.
Learning rates are constant until the final 1200 steps, then linearly decayed to zero.

\begin{table}[ht]
\centering
\caption{Optimizer hyperparameters.}
\label{tab:optim-hparams}
\begin{tabular}{ll}
\toprule
Name & Value \\
\midrule
Matrix optimizer & Muon \\
Matrix LR & 0.04 \\
Muon momentum & 0.85 $\rightarrow$ 0.95 over 500 steps \\
Adam $\beta_1,\beta_2,\epsilon$ & 0.9, 0.95, $10^{-8}$ \\
Tied embedding LR & 0.05 \\
Scalar/control LR & 0.04 \\
Warmdown steps & 1200 \\
\bottomrule
\end{tabular}
\end{table}

\end{document}